  \providecommand\BibTeX{{%
    \normalfont B\kern-0.5em{\scshape i\kern-0.25em b}\kern-0.8em\TeX}}}
\NewDocumentCommand \RangeSet { G{N} } {[\mathinner{#1}]}
\NewDocumentCommand \LinearSymbol {} {\oast}
\NewDocumentCommand \LeftSymbol {} {\blacktriangleleft}
\NewDocumentCommand \RightSymbol {} {\blacktriangleright}
\NewDocumentCommand \UnfoldSymbol {} {\oslash}
\NewDocumentCommand \SingleSize {s O{J} G{n}} {\IfBooleanTF{#1}{\bar}{} {#2}_{#3}}
\NewDocumentCommand \LinearSize {s O{J}} {\IfBooleanTF{#1}{\bar}{} #2^{\LinearSymbol}}
\NewDocumentCommand \LeftSize {s O{J} G{n}} {\IfBooleanTF{#1}{\bar}{} {#2}_{#3}^{\LeftSymbol}}
\NewDocumentCommand \RightSize {s O{J} G{n}} {\IfBooleanTF{#1}{\bar}{} {#2}_{#3}^{\RightSymbol}}
\NewDocumentCommand \UnfoldSize {s O{J} G{n}} {\IfBooleanTF{#1}{\bar}{} {#2}_{#3}^{\UnfoldSymbol}}
\NewDocumentCommand \FullSize{s O{J} G{N}} {%
  \IfBooleanTF{#1}%
  {\SingleSize*[#2]{0} \times \SingleSize*[#2]{1} \times \cdots \times \SingleSize*[#2]{#3-1}}%
  {\SingleSize[#2]{0} \times \SingleSize[#2]{1} \times \cdots \times \SingleSize[#2]{#3-1}}%
}
\NewDocumentCommand \SizeVec {s O{J}} {\mathbf{\IfBooleanTF{#1}{\bar}{} {#2}}}
\NewDocumentCommand \SingleIndex {s O{j} G{n}} {\IfBooleanTF{#1}{\bar}{} {#2}_{#3}}
\NewDocumentCommand \FullIndex{s O{j} G{N}} {%
  \IfBooleanTF{#1}%
  {(\SingleIndex*[#2]{0}, \SingleIndex*[#2]{1}, \dots, \SingleIndex*[#2]{#3-1})}%
  {(\SingleIndex[#2]{0}, \SingleIndex[#2]{1}, \dots, \SingleIndex[#2]{#3-1})}%
}
\NewDocumentCommand \IndexVec {s O{j}} {\mathbf{\IfBooleanTF{#1}{\bar}{} {#2}}}
\NewDocumentCommand \FullSubscript{s O{j} G{N}} {%
  \IfBooleanTF{#1}%
  {\SingleIndex*[#2]{0} \SingleIndex*[#2]{1} \dots \SingleIndex*[#2]{#3-1}}%
  {\SingleIndex[#2]{0} \SingleIndex[#2]{1} \dots \SingleIndex[#2]{#3-1}}%
}
\NewDocumentCommand \FacMat {s O{U} G{n}} {{\mathbf{\IfBooleanTF{#1}{\bar}{} #2}}^{(#3)}}
\NewDocumentCommand \FacSize {O{I} G{n}} {#1_{#2} \times R_{#2}}
\NewDocumentCommand \Tensor {s O{Y}} {\boldsymbol{\IfBooleanTF{#1}{\bar}{}
{\mathscr{\MakeUppercase{#2}}}}}
\NewDocumentCommand \ColumnBlock {O{\Mz{Y}{n}} G{\ell}} {#1 \left[#2\right]}
\newcommand{\Tra}{{\sf T}}
\newcommand{\V}[2][]{{\bm{#1\mathbf{\MakeLowercase{#2}}}}} 
\newcommand{\M}[2][]{{\bm{#1\mathbf{\MakeUppercase{#2}}}}} 
\newcommand{\Mz}[3][]{\M[#1]{#2}_{(#3)}}
\newcommand{\lap}{\mathscr{L}}
\newcommand{\vol}{\text{vol}}
\begin{document}
\fancyhead{} 

\title{Hypergraph Random Walks, Laplacians, and Clustering}


\author{Koby Hayashi}
\affiliation{
 \institution{Georgia Institute of Technology}
 \streetaddress{North Ave NW}
 \city{Atlanta}
 \state(GA)
 \postcode{30332}
 \country{USA}}
\email{khayashi9@gatech.edu}

\author{Sinan G. Aksoy}
\affiliation{%
 \institution{Pacific Northwest National Laboratory}
 \streetaddress{902 Battelle Blvd}
 \city{Richland}
 \state(WA)
 \postcode{99354}
 \country{USA}}
\email{sinan.aksoy@pnnl.gov}

\author{Cheong Hee Park}
\affiliation{%
 \institution{Chungnam National University}
 \streetaddress{99 Daehakro}
 \city{Daejeon}
 \postcode{34134}
 \country{Korea}}
\email{cheonghee@cnu.ac.kr}

\author{Haesun Park}
\affiliation{%
 \institution{Georgia Institute of Technology}
 \streetaddress{North Ave NW}
 \city{Atlanta}
 \state(GA)
 \postcode{30332}
 \country{USA}}
\email{hpark@cc.gatech.edu}

\renewcommand{\shortauthors}{Hayashi, Aksoy, Hee Park, and Park}
\begin{abstract}
We propose a flexible framework for clustering hypergraph-structured data based on recently proposed random walks utilizing edge-dependent vertex weights. 
When incorporating edge-dependent vertex weights (EDVW), a weight is associated with each vertex-hyperedge pair, yielding a weighted incidence matrix of the hypergraph. 
Such weightings have been utilized in term-document representations of text data sets. 
We explain how random walks with EDVW serve to construct different hypergraph Laplacian matrices, and then develop a suite of clustering methods that use these incidence matrices and Laplacians for hypergraph clustering. 
Using several 
data sets from real-life applications, we compare the performance of these clustering algorithms experimentally against a variety of existing hypergraph clustering methods. 
We show that the proposed methods produce high-quality clusters and conclude by highlighting avenues for future work. 
\end{abstract}

\begin{CCSXML}
<ccs2012>
 <concept>
  <concept_id>10010520.10010553.10010562</concept_id>
  <concept_desc>Computer systems organization~Embedded systems</concept_desc>
  <concept_significance>500</concept_significance>
 </concept>
 <concept>
  <concept_id>10010520.10010575.10010755</concept_id>
  <concept_desc>Computer systems organization~Redundancy</concept_desc>
  <concept_significance>300</concept_significance>
 </concept>
 <concept>
  <concept_id>10010520.10010553.10010554</concept_id>
  <concept_desc>Computer systems organization~Robotics</concept_desc>
  <concept_significance>100</concept_significance>
 </concept>
 <concept>
  <concept_id>10003033.10003083.10003095</concept_id>
  <concept_desc>Networks~Network reliability</concept_desc>
  <concept_significance>100</concept_significance>
 </concept>
</ccs2012>
\end{CCSXML}

\ccsdesc[500]{Information Systems}
\ccsdesc[300]{Information Systems Applications}
\ccsdesc{Data mining}
\ccsdesc[100]{Clustering}

\keywords{hypergraphs; random walks; clustering; Laplacian; Symmetric NMF; Joint NMF; edge-dependent vertex weights}

\maketitle


\section{Introduction}
While graphs serve as a popular tool for modeling a myriad of data analytics tasks, graphs are limited to representing pairwise relationships between objects.  
However, data sets frequently contain multi-way relationships. 
For instance, in a term-document matrix that is frequently used to represent text data, multiple terms are related to each other from their appearance in the same document. 
Multi-way relationships also abound in many contexts, such as when multiple people author a paper, groups of proteins interact, or mutations in multiple genes are associated with a disease. These multi-way relationships are different from multiple binary relationships.
More generally, $k$-way relationships occur whenever information naturally presents as set-valued, bipartite or tabular. 
In such cases, hypergraphs -- generalizations of graphs in which edges may link any number of vertices -- are more appropriate. 

While hypergraph-structured data is widely prevalent, utilizing a hypergraph model to perform analytics tasks is often challenging. 
First and foremost, a primary difficulty concerns how to best {\it represent} a hypergraph for key analytics tasks such as clustering. 
A number of fundamental graph representations such as the adjacency matrix or Laplacian matrix, have no obvious or canonical analog in the hypergraph setting. 
In particular, developing such representations is especially challenging for non-uniform hypergraphs, which appear most often in real applications. 
Furthermore, work by Agarwal \cite{Agarwal:2006:HOL:1143844.1143847} has shown that many hypergraph Laplacian matrices are in fact directly related to various {graph expansions} of a hypergraph, and in this sense, do not fully capture the higher-order relationships modeled by the hypergraph. 
Secondly, another difficulty concerns devising {\it analytic methods} that can effectively utilize these hypergraph representations. 
Indeed, hypergraph representations such as tensors for uniform hypergraphs, where the orders of all hyperedges are same, while faithful in capturing higher-order relationships, may be limited to special cases and difficult or prohibitively expensive to adopt and analyze in practice, due to their large dimensionality or otherwise complicated properties. 

One promising approach for addressing these challenges is rooted in the study of random walks on hypergraphs. 
Much of the work on random walks on hypergraphs has limited applicability to real data because it only considers uniform hypergraphs  \cite{Cooper2013, Lu2013, Liu2018}. 
Other work has considered non-uniform hypergraph random walks, but analyzes simple random walks, in which vertices are chosen uniformly at random from a hyperedge.
However, these random walks have been shown \cite{DBLP:conf/icml/ChitraR19} to be equivalent to a random walk on the graph clique expansion of the hypergraph. 
Recent work by Chitra and Raphael \cite{DBLP:conf/icml/ChitraR19} has shown incorporating so-called edge-dependent vertex weights (EDVW) into the random walk is a necessary condition to circumvent this equivalence, therefore to better capture the higher-order properties of hypergraphs. 
Such vertex weightings, associated with each vertex-hyperedge pair, have appeared in a number of different contexts, such as in term-document matrix represented via tf-idf (term frequency and inverse document frequency), weighting based on the significance in the author order in research paper data, or in general, whenever incidence structures have weighted (rather than binary) cells. 

In this work, we use EDVW random walks to develop a diverse and flexible framework for clustering hypergraph data. 
We explain how to construct several different hypergraph representations in incidence matrices and Laplacians based on EDVW random walks, as well as how one can apply a number of different clustering algorithms to these representations. In addition, we experimentally compare the performance of these EDVW random walk-based clustering approaches to existing hypergraph clustering approaches. 

We organize our work as follows: in Section \ref{sec:prelim}, we provide the necessary preliminaries and briefly review random walks on hypergraphs. 
In Section \ref{sec:lap}, we explain how the probability transition matrix of EDVW random walks may be utilized to construct a number of different hypergraph Laplacians, and survey appropriate possibilities from the literature. 
In Section \ref{sec:clusAlg}, we define clustering methodologies that may be used in conjunction with the aforementioned hypergraph representations. 
In Section \ref{sec:others}, we review other approaches from the literature, consisting of both different hypergraph representations as well as different clustering methodologies. 
In Section \ref{sec:exp} we compare these approaches to ours experimentally: we describe our test datasets, experimental setup, clustering performance evaluation metrics, and report our findings. 

\section{Hypergraphs and random walks}\label{sec:prelim}

\subsection{Preliminaries}

Hypergraphs are generalizations of graphs in which edges can connect any number of vertices. 
More formally, a hypergraph $H=(V,E)$ is a set of vertices $V=\{v_1,\dots,v_n\}$ and a family of edges $E=(e_1,\dots,e_m)$ where $e_i \subseteq V$ for $i=1,\dots,m$. 
A graph is a uniform hypergraph of edge order 2, i.e., every edge $e$ in a graph has $|e|=2$. 
Throughout, we assume the hypergraph has no isolated vertices, i.e. $\bigcup_{e \in E} e = V$, and no empty edges. 
A hypergraph may be represented by its (unweighted) incidence matrix $\M{X} \in \{0,1\}^{|E| \times |V|}$, where 
\begin{align}
\M{X}_{ev}=\begin{cases} 1 & \mbox{ if vertex $v$ belongs to hyperedge $e$,} \\ 0 & \mbox{ otherwise}.  \end{cases} 
\end{align}
(Note that this incidence matrix is sometimes transposed to denote vertex by hyperedge relationships. However, we will use the above notation to make it consistent with the notations in the closely related papers.)
The dual of a hypergraph, denoted $H^*$, is the hypergraph associated with $\M{X}^\Tra$. 

In practice, it is common that hypergraphs are transformed to graphs. 
One such popular transformation replaces each hyperedge with a clique, and is thus called the {\it clique expansion}. 
More precisely, the clique expansion of a hypergraph $H=(V,E)$ is a graph on the same vertex set, with edge set $\{\{u,v\} \, | \, u,v \in e \mbox{ for some } e\in E\}$. 
The clique expansion has weighted adjacency matrix given by $\M{X}^\Tra\M{X}$, where the $(i,j)$ entry denotes the number of shared hyperedges to which vertices $i$ and $j$ belong. 

Although such transformations are convenient in that they facilitate the application of graph-theoretic methods, they also have several drawbacks. 
First, the clique expansion is {\it lossy} in the sense that non-isomorphic hypergraphs may have identical clique expansions. 
In fact, recent work by Kirkland \cite{Kirkland2017} confirms this information loss persists even when hypergraph duality is considered: that is, the {\it pair} of matrices, $\M{X}^\Tra \M{X}$ and $\M{X}\M{X}^\Tra$, corresponding to the weighted clique expansion of a hypergraph and its dual, still does not uniquely identify a hypergraph up to isomorphism. 
This information loss is a primary reason why clique expansion based hypergraph representations are sometimes criticized.

In addition to information loss, another drawback of clique expansions is their density. 
In particular, since each hyperedge of size $k$ contributes ${k \choose 2}$ edges in the clique expansion, a hypergraph with large maximum edge size will have a clique expansion that may be prohibitively dense to analyze or even hold in computer memory. 
Nonetheless, as will soon be clear, clique expansions are a useful reference point for understanding hypergraph random walks, as well their associated Laplacians.

\subsection{Random walks}
\label{Sec_RW}
A random walk on a hypergraph $H=(V,E)$ is a discrete-time Markov chain $X_1,X_2,\dots,$ on state space $V$ defined by given transition probabilities. 
Letting $\omega: E \to \mathbb{R_{+}}$ denote any function that assigns positive weights to the hyperedges of a hypergraph, a standard formulation for a hypergraph random walk may be given as follows: if at time $t$, the current state is $X_t=v_t$, then

\begin{enumerate}
\item Select a hyperedge $e \ni v_t$ with probability proportional to $\omega(e)$.
\item Select a vertex $v \in e$ uniformly at random, and set $X_{t+1}=v$. 
\end{enumerate}

In this random walk, a vertex is chosen uniformly at random from a hyperedge, and we refer to it as a {\it simple random walk}. 
We note that a number of hypergraph random walks studied in the literature follow this form; see \cite{Avin2014, Cooper2013, NIPS2006_3128}. 

For the special case of graphs, describing the random walk as a two-step process is generally redundant: since an edge in a graph can only connect {\it two} vertices, the selection of an incident edge uniquely determines the next state in the chain. 
For more on graph random walks, see \cite{aldous1995reversible}.
In contrast, a hyperedge may connect any number of vertices, any of which could be chosen for the next state. 
Accordingly, the second step above is the key for generalizing random walks on graphs to hypergraphs. 
Focusing on this step, Chitra and Raphael \cite{DBLP:conf/icml/ChitraR19} suggest choosing a vertex from a hyperedge using vertex weightings {\it specific to that hyperedge}. 
More formally, for $e\in E$, letting $\gamma_e: e \to \mathbb{R_+}$ denote the weighting function for a hyperedge $e$, we have if $X_t=v_t$, then 
\begin{enumerate}
\item Select a hyperedge $e \ni v_t$ with probability proportional to $\omega(e)$.
\item Select a vertex $v \in e$ with probability proportional to $\gamma_e(v)$, and set $X_{t+1}=v$. 
\end{enumerate}
The collection of functions $\{\gamma_e\}_{e\in E}$ is called an {\it edge-independent} vertex weighting (EIVW) of $H$ if for every vertex, $\gamma_e(v)=\gamma_{e'}(v)$ for all pairs of hyperedges $e,e'$ containing $v$ \cite{DBLP:conf/icml/ChitraR19}. 
Otherwise, the vertex-weighting is {\it edge-dependent}. 
Edge-dependent vertex weights (EDVW) may also be represented by a weighted incidence matrix $\M{R} \in \mathbb{R}^{|E| \times |V|}_{\geq 0}$,
\begin{align}
\M{R}_{ev} &=  \begin{cases} \gamma_e(v) & \mbox{ if vertex $v$ belongs to hyperedge $e$,} \\ 0 & \mbox{ otherwise}. \end{cases} 
\end{align}
Stated equivalently, the edge-dependent condition is that there is a column in $\M{R}$ in which the nonzero entries are not all equal. 
As we will see in Section \ref{sec:exp}, real data can be naturally represented with EDVW. 
A good example is the term-document matrix for text data sets, where term frequency-inverse document frequency (tf-idf) matrices are natural analogies for EDVW, which we view as a weighted hypergraph incidence matrix (more precisely, as $\M{R}^\Tra$ with terms as vertices and documents as hyperedges). 
EDVW also naturally arise for other types of data, such as author-position for author-paper networks, and association scores for gene-disease data.
In general, whenever vertex-weights are edge-dependent, we will call the random walk described above an {\it EDVW random walk}. 

In a simple random walk on a hypergraph without vertex weights (or with trivial edge-independent vertex weights), a random walk is {\it always} equivalent \footnote{Two random walks on the same state space are equivalent if they have the same transition probability between each pair of states.} to a random walk on the clique expansion graph of the hypergraph, under some edge weighting of the clique expansion as shown in \cite{DBLP:conf/icml/ChitraR19}. 
They show EDVW are necessary for a random walk on a hypergraph not to be characterized simply as a random walk on the clique expansion graph. 
In fact, EDVW random walks may be {\it non-reversible}\footnote{A random walk with probability transition matrix $\M{P}$ and stationary distribution $\V{\pi}$ is reversible if $\V{\pi}_i \M{P}_{ij} = \V{\pi}_{j}\M{P}_{ji}$ for all pairs of states $i,j$. } Markov chains, which implies that they cannot be represented as random walks on {\it any} undirected graph. 
In summary, one potential avenue for capturing higher-order properties of hypergraphs is through EDVW random walks \cite{DBLP:conf/icml/ChitraR19}. 
Next, we describe how to utilize EDVW random walks to construct various Laplacian matrices for hypergraph clustering. 

\section{From random walks to hypergraph Laplacians}\label{sec:lap}
\begin{figure}
\centering
\includegraphics[scale=0.5]{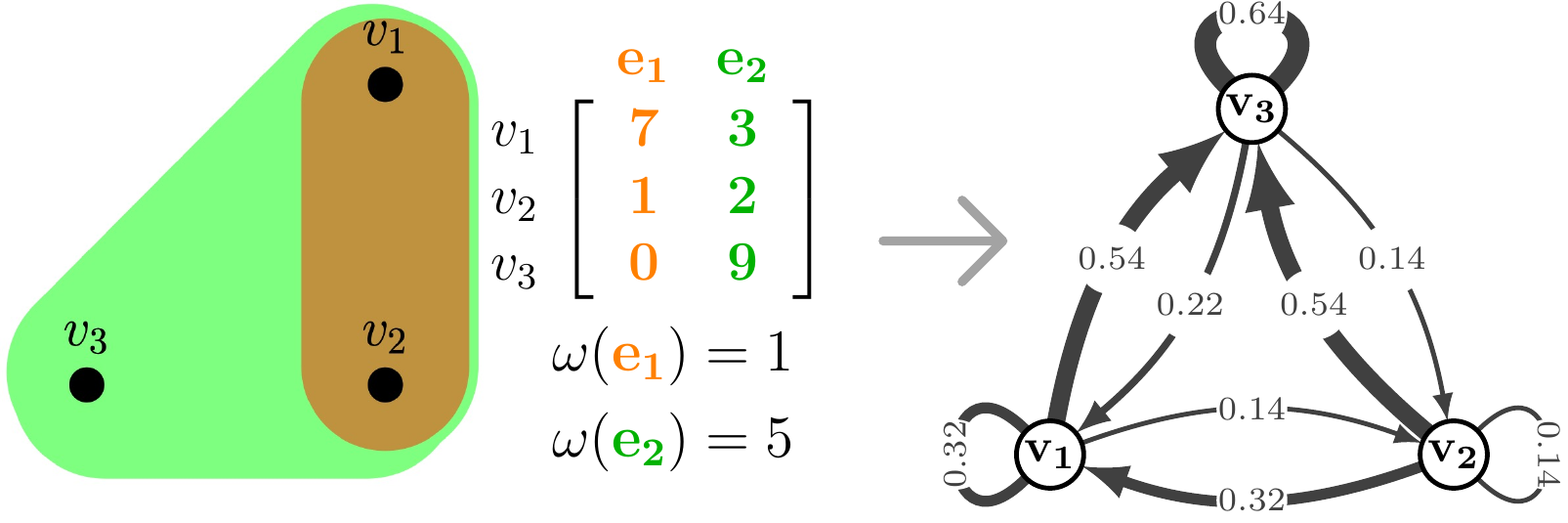}
\caption{A hypergraph with EDVW and hyperedge weights (left) and the representative digraph of the corresponding EDVW random walk (right).}
\label{fig:repDigraph}
\end{figure}
In graph theory, random walks serve as an implicit or explicit foundation for constructing a number of Laplacian matrices. 
For instance, when $\M{P}$ is the transition probability matrix of the random walk, the random walk Laplacian is $\M{I}-\M{P}$. 
The normalized Laplacian matrix popularized by Chung \cite{chung1997spectral} also has eigenvalues that are related to those of $\M{P}$ by elementary shifts and scalings.
Consequently, it is unsurprising the study of Laplacians is deeply intertwined with that of random walks; for more, see the monograph \cite{chung1997spectral}. 

Here, we will explain how one may similarly construct various Laplacian matrices for hypergraphs using the transition probability matrix $\M{P}$ of an EDVW random walk \cite{DBLP:conf/icml/ChitraR19}.
To formally define the transition probability matrix, let $\M{R}$ denote the $|E| \times |V|$ vertex-weight matrix, with $\M{R}_{ev}=\gamma_e(v)$ if $v \in e$ and 0 otherwise. Note that $\M{R}^T$ is a weighted incidence matrix where the weight of each vertex is dependent on the hyperedge it is incident to. 
Similarly, let $\M{W}$ denote the $|V| \times |E|$ hyperedge-weight matrix, with $\M{W}_{ve}=\omega(e)$ if $v\in e$ and 0 otherwise. 
Finally, let $\M{D}_V=\mbox{diag}(\M{W} \V{e})$ and $\M{D}_E=\mbox{diag}(\M{R} \V{e})$ denote the diagonal vertex degree and hyperedge weight matrices, where $\V{e}$ denotes the vector of an appropriate dimension with all its components ones. 
The transition probability matrix for the EDVW random walk is given by
\begin{equation}
    \label{chitra_P}
    \M{P} = \M{D}_V^{-1} \M{W} \M{D}_E ^{-1} \M{R}
\end{equation}

As we explain next, this matrix will be used explicitly to construct a number of different hypergraph Laplacians via its interpretation as an edge-weighted directed graph. 

\subsection{Representative digraph} 
Recall the transition matrix $\M{P}$ completely defines the random walk on a hypergraph $H$. 
One may also represent $\M{P}$, and hence the random walk, as a directed graph (digraph) on vertex set $V$ and edge set $E=\{(i,j) \, | \,  \M{P}_{ij}>0\}$, where the edge weight of $(i,j)$ is simply the transition probability $\M{P}_{ij}$. 
We call this the {\it representative digraph} of the random walk. 
Figure \ref{fig:repDigraph} illustrates an example of a hypergraph with hyperedge weights, EDVW weights presented as $\M{R}^\Tra$, and the representative digraph of the associated random walk. 

When derived from hypergraph random walks, representative digraphs have several notable properties. 
First, they do not contain any source or sink vertices since $\M{P}_{ij}$ is nonzero if and only if $\M{P}_{ji}$ is nonzero as well. 
Furthermore, the representative digraph of a hypergraph random walk is also strongly connected if and only if the hypergraph is connected. 
Consequently, just as any hypergraph may be written as the vertex and hyperedge-disjoint union of connected hypergraphs, its representative digraph may also be represented as the vertex and edge-disjoint union of strongly connected components -- a property that doesn't necessarily hold for directed graphs in general. 
This means one can apply our proposed clustering methodologies to cluster any hypergraph on a per connected component basis, analogous to how graph clustering methodologies are sometimes performed separately on each connected component of the graph. 
Lastly, since representative digraphs always contains loop edges of the form $(i,i)$, this guarantees hypergraph random walks are always {\it aperiodic}. 
Therefore, a random walk on any connected hypergraph is {\it ergodic}, which guarantees convergence to the stationary distribution. 

\subsection{Laplacians based on EDVW random walks}\label{subsec:laps} 
Via the representative digraph, Laplacians for edge-weighted directed graphs naturally serve to construct random-walk based Laplacians for hypergraphs. 
A number of different directed graph Laplacians have been proposed, and they could be utilized in this context. 
Perhaps most natural for our purposes are the combinatorial and normalized digraph Laplacians matrices proposed by Chung \cite{FanChungDLaplacian}. 
Indeed, we note Chitra and Raphael adopt Chung's combinatorial digraph Laplacian as their hypergraph Laplacian in \cite{DBLP:conf/icml/ChitraR19}. 
To define these matrices, recall the stationary distribution $\V{\pi}$ of a random walk is the all-positive dominant left eigenvector of the transition probability matrix $\M{P}$,
\begin{align}
\V{\pi}^\Tra \M{P}=\V{\pi}^\Tra,
\end{align}
scaled to have unit 1-norm. 
By the Perron Frobenius theorem, the stationary distribution $\V{\pi}$ exists if the matrix $\M{P}$ is irreducible, which, in turn, occurs precisely when the representative digraph is strongly connected. 
Letting $\M{\Phi} = \text{diag}(\V{\pi})$, Chung defines the directed combinatorial Laplacian $\M{L}$ and normalized Laplacian $\M{\lap}$ as follows:
\begin{align}
\M{L} &= \M{\Phi} - \frac{\M{\Phi}\M{P} + \M{P}^\Tra \M{\Phi}}{2} \label{eq:dirCombLap} \\
\M{\lap} 
= \M{\Phi}^{-\frac{1}{2}}\M{L} \M{\Phi}^{-\frac{1}{2}} 
& = \M{I}-\frac{\M{\Phi}^{\frac{1}{2}}\M{P}\M{\Phi}^{-\frac{1}{2}} + \M{\Phi}^{-\frac{1}{2}}\M{P}^\Tra\M{\Phi}^{\frac{1}{2}}}{2} \label{eq:dirNormLap}
\end{align}
Clearly, both of the above matrices are symmetric. 
When applied to graphs, $\M{L}$ and $\M{\lap}$ above are equal to the combinatorial and normalized graph Laplacians, respectively. 

Given their explicit basis in random walks and applicability to any irreducible transition matrix $\M{P}$, Chung's Laplacians are natural choices for deriving hypergraph Laplacians based in EDVW random walks. 
Nonetheless, there are other directed Laplacians proposed and studied in the literature that could serve, some of which are asymmetric. 
Bauer \cite{Bauer2012} studies an asymmetric digraph Laplacian that, for digraphs without source vertices, is defined as $\M{I}-\M{P}$. 
Li and Zhang \cite{li2010random} study the asymmetric digraph Laplacian $ \M{\Gamma}=\M{\Phi}^{1/2} \left(\M{I}-\M{P}\right) \M{\Phi}^{-1/2}$, which is related to Chung's normalized Laplacian $\M{\lap}$ above by $\M{\lap}= \tfrac{\M{\Gamma} + \M{\Gamma}^\Tra}{2}$. 
Like Chung's normalized digraph Laplacian $\M{\lap}$, $\M{\Gamma}$ is also a generalization of the normalized graph Laplacian, which it reduces to in the graph case. 

Lastly, one final class of digraph matrices and Laplacians, which have recently received attention in the literature, utilize {\it complex values}. 
Mohar and Guo \cite{Guo2016b} propose a Hermitian digraph adjacency matrix which encodes adjacency using the imaginary unit $i$, or, as proposed more recently \cite{Mohar2020}, the sixth root of unity. 
While such matrices have varied algebraic properties that capture combinatorial properties of the directed graph, their applicability and effectiveness as representations for clustering has yet to be established. 

One exception, however, is recent work by Cucuringu \cite{Cucuringu2019}. 
Utilizing a variant of Mohar and Guo's matrix, Cucuringu proposes a simple digraph spectral clustering algorithm and justifies its effectiveness by analyzing its performance in recovering planted clusters from the Directed Stochastic Block Model, a generalization of the classical SBM \cite{Holland1983}. 
Adapted to our setting, Cucuringu's matrix is
\begin{align}
\M{B}=i \cdot \left(\M{P}-\M{P}^\Tra\right),
\end{align}
where $i$ is the unit imaginary number. 
We note $\M{B}$ is both Hermitian and skew-symmetric. 
This skew-symmetrization of the transition matrix may be thought of as constructing an {\it oriented} digraph (i.e. a digraph without reciprocal edges) from the representative digraph, in which the edge weight between $i$ and $j$ is the difference in their probability transitions, and directionality is encoded by sign. 
For clustering, Cucuringu suggests normalizing this matrix by the diagonal matrix with $\M{S}_{ii}=\sum_{j} |\M{B}_{ij}|$, i.e., forming $\M{S}^{-1}\M{B}$.

\subsection{Relationship with the clique expansion}
Lastly we clarify how, when applied to the representative digraph of the EDVW hypergraph random walk, the above matrices are related to Laplacians of the hypergraph's clique expansion graph. 
This question was considered by Agarwal \cite{Agarwal:2006:HOL:1143844.1143847}, who showed a number of other hypergraph Laplacians are equivalent to the graph Laplacians of the clique expansion. 

As we've noted, an EDVW hypergraph random walk may be non-reversible, in which case there is no graph (including the clique expansion) with probability transition matrix equal to that of the hypergraph, as all graph random walks are necessarily reversible \cite{aldous1995reversible}. 
Consequently, it immediately follows that the hypergraph Laplacian matrix $\M{I}-\M{P}$ cannot be characterized as $\M{I}-\M{Q}$, where $\M{Q}$ is the probability transition matrix of a random walk on a graph. 
Furthermore, we prove an analogous statement holds for Li and Zhang's digraph Laplacian matrix, $\M{\Phi}^{1/2} \left(\M{I}-\M{P}\right) \M{\Phi}^{-1/2}$.

\begin{proposition} \label{prop:liZhang}
Let $\M{P}$ denote the probability transition matrix of an EDVW random walk on a connected hypergraph. If the Markov chain given by $\M{P}$ is non-reversible, there does not exist any edge-weighted graph $G$ such that
\begin{align}
\M{\Phi}^{1/2} \left(\M{I}-\M{P}\right) \M{\Phi}^{-1/2} &= \M{\Pi}^{1/2} \left(\M{I}-\M{Q}\right) \M{\Pi}^{-1/2},
\end{align}
where $\M{Q}$ denotes the probability transition matrix of a random walk on $G$, and $\M{\Phi}, \M{\Pi}$ denote diagonal matrices with the stationary distributions of $\M{P},\M{Q}$, respectively, on the diagonal.
\end{proposition}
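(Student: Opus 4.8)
The plan is to argue by contradiction, exploiting the fact that Li and Zhang's digraph Laplacian $\M{\Phi}^{1/2}(\M{I}-\M{P})\M{\Phi}^{-1/2}$ is symmetric precisely when $\M{P}$ is reversible, whereas the analogous matrix built from the transition matrix of a graph random walk is always symmetric.

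First I would simplify both sides. Since $\M{\Phi}$ and $\M{\Pi}$ are diagonal, $\M{\Phi}^{1/2}(\M{I}-\M{P})\M{\Phi}^{-1/2} = \M{I} - \M{\Phi}^{1/2}\M{P}\M{\Phi}^{-1/2}$, and likewise for the right-hand side, so the claimed identity is equivalent to $\M{\Phi}^{1/2}\M{P}\M{\Phi}^{-1/2} = \M{\Pi}^{1/2}\M{Q}\M{\Pi}^{-1/2}$. Next I would record an elementary observation: if the positive stationary distribution of a transition matrix $\M{M}$ is placed on the diagonal of $\M{\Psi}$, then $\M{\Psi}^{1/2}\M{M}\M{\Psi}^{-1/2}$ is symmetric if and only if $\M{\Psi}\M{M}$ is symmetric, i.e.\ if and only if $\M{\Psi}_{ii}\M{M}_{ij} = \M{\Psi}_{jj}\M{M}_{ji}$ for all $i,j$ --- which is exactly the detailed-balance (reversibility) condition.

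Two facts then finish the argument. First, every random walk on an edge-weighted undirected graph is reversible, since its detailed-balance equations reduce to the symmetry of the adjacency weights; see \cite{aldous1995reversible}. Hence, by the observation, $\M{\Pi}^{1/2}\M{Q}\M{\Pi}^{-1/2}$ is symmetric. Second, because the hypergraph is connected its representative digraph is strongly connected, so $\M{P}$ is irreducible and $\V{\pi}$ --- and therefore $\M{\Phi}$ --- is uniquely determined. Now suppose the claimed identity held. Then $\M{\Phi}^{1/2}\M{P}\M{\Phi}^{-1/2}$ equals the symmetric matrix $\M{\Pi}^{1/2}\M{Q}\M{\Pi}^{-1/2}$, so it is symmetric, and by the observation $\M{P}$ satisfies detailed balance with respect to $\V{\pi}$; that is, $\M{P}$ is reversible, contradicting the hypothesis. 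Therefore no such $G$ exists.

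I do not expect a genuine obstacle. The only points needing care are invoking irreducibility so that ``the'' stationary distribution of $\M{P}$ is unambiguous, and noting that even if $G$ is disconnected --- so that its stationary distribution is not unique --- its transition matrix still satisfies detailed balance componentwise with respect to any positive stationary distribution, so $\M{\Pi}^{1/2}\M{Q}\M{\Pi}^{-1/2}$ remains symmetric and the argument goes through unchanged.
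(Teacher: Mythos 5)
Your proof is correct, but it takes a genuinely different route from the paper's. The paper argues via Kolmogorov's cycle criterion: assuming the identity, it derives $\M{P}_{ij}=(\sqrt{\phi_j\pi_i}/\sqrt{\phi_i\pi_j})\M{Q}_{ij}$, notes that the ergodic, reversible walk $\M{Q}$ satisfies the cycle-product condition, observes that the $\sqrt{\phi/\pi}$ factors telescope around any closed cycle so that $\M{P}$ inherits the same cycle products, and then invokes the converse of Kolmogorov's criterion to conclude $\M{P}$ is reversible. You instead use the elementary characterization that $\M{\Psi}^{1/2}\M{M}\M{\Psi}^{-1/2}$ is symmetric if and only if $\M{M}$ satisfies detailed balance with respect to $\M{\Psi}$: since $\M{\Pi}^{1/2}\M{Q}\M{\Pi}^{-1/2}$ is symmetric (graph walks are reversible), the assumed equality forces $\M{\Phi}^{1/2}\M{P}\M{\Phi}^{-1/2}$ to be symmetric, hence $\M{P}$ reversible. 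Your argument is shorter and more self-contained --- it needs no cycle criterion and no telescoping computation, only the entrywise identity $\sqrt{\psi_i}\M{M}_{ij}/\sqrt{\psi_j}=\sqrt{\psi_j}\M{M}_{ji}/\sqrt{\psi_i}\iff\psi_i\M{M}_{ij}=\psi_j\M{M}_{ji}$ --- and it also exposes the structural reason the result holds: Li--Zhang's Laplacian of a graph walk is always a symmetric matrix, while that of a non-reversible chain never is. The paper's route, by contrast, yields as a byproduct the explicit entrywise relation between $\M{P}$ and $\M{Q}$ and the fact that $\M{P}$ would satisfy the Kolmogorov cycle condition, which is a slightly stronger structural statement but is not needed for the conclusion. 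Your closing remarks on irreducibility of $\M{P}$ and on the disconnected-$G$ case are careful and correct; both are glossed over in the paper.
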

\begin{proof}
Assume such a graph exists. Then
\begin{align}
\M{P}_{ij}=\left(\sqrt{{\phi_j \pi_i}}/\sqrt{{\phi_i \pi_j}}\right)\M{Q}_{ij}, \label{eq:liClique}
\end{align}
where $\phi_i = \M{\Phi}_{ii}$ and $\pi_i = \M{\Pi}_{ii}$. Since the random walk given by $\M{P}$ is ergodic, Eqn. $(\ref{eq:liClique})$ implies random walk given by $\M{Q}$ is ergodic as well. Furthermore, since all random walks on graphs are time reversible, the random walk given by $\M{Q}$ is time reversible. Ergodic, time-reversible random walks satisfy Kolmogorov's criterion; applied to $\M{Q}$, Kolmogorov's criterion states that for any set of vertices, $\{v_1,\dots,v_n\}$, we have\\
$ \M{Q}_{v_1v_2}\M{Q}_{v_2 v_3}\dots \M{Q}_{v_{n-1} v_n}\M{Q}_{v_n v_1} =\M{Q}_{v_1v_n}\M{Q}_{v_{n} v_{n-1}} \dots \M{Q}_{v_3 v_2}\M{Q}_{v_2 v_1}. $\\
Now, 
\begin{align*}
\M{P}_{v_1v_2}\M{P}_{v_2v_3}\dots \M{P}_{v_n v_1}
&= \M{Q}_{v_1v_2}\M{Q}_{v_2v_3}\dots \M{Q}_{v_nv_1}\\ 
&= \M{Q}_{v_1v_n}\M{Q}_{v_{n} v_{n-1}}\dots \M{Q}_{v_2 v_1} 
= \M{P}_{v_1v_n}\M{P}_{v_{n}v_{n-1}}\dots \M{P}_{v_2 v_1}
\end{align*}
due to Eqn. $(\ref{eq:liClique})$, which implies $\M{P}$ is the transition matrix of a time-reversible Markov chain, a contradiction. 
\end{proof}
The above result means the EDVW random walked based hypergraph Laplacians $\M{I}-\M{P}$ and  $\M{\Phi}^{1/2} \left(\M{I}-\M{P}\right) \M{\Phi}^{-1/2}$ cannot be characterized as Laplacians of graphs, and hence are not subject to Agarwal's criticism of hypergraph Laplacians in \cite{Agarwal:2006:HOL:1143844.1143847}. 
However, Chung's digraph Laplacians symmetrize the probability transition matrix and, as has been previously noted in \cite{li2010random, gleichdcluster}, {\it can} be understood as Laplacians of weighted undirected graphs. 
In our context, this observation may be stated more precisely as follows:

\begin{proposition} \label{prop:chungEq}
Let $\M{L}$ and $\M{\lap}$ denote the hypergraph Laplacians obtained from applying Chung's digraph Laplacians defined in Eqns $(\ref{eq:dirCombLap})$ and  $(\ref{eq:dirNormLap})$, to the representative digraph of a random walk on a connected hypergraph $H$. The matrix
\begin{align}
\M{A}=\frac{\M{\Phi}\M{P} + \M{P}^\Tra \M{\Phi}}{2} \label{eq:chung_undir}
\end{align}
is the adjacency matrix of the clique expansion graph of $H$, under an edge-weighting. Furthermore, the edge-weighted graph given by $\M{A}$ has combinatorial and normalized graph Laplacian matrices equal to $\M{L}$ and $\M{\lap}$, respectively. 
\end{proposition}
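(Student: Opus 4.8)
The plan is to verify three things in turn: (i) that $\M{A}$ is a symmetric, entrywise-nonnegative matrix whose off-diagonal support is exactly the edge set of the clique expansion, so that $\M{A}$ is indeed a clique-expansion adjacency matrix under \emph{some} edge-weighting; (ii) that the row sums of $\M{A}$ return the stationary vector, i.e.\ $\M{A}\V{e} = \V{\pi}$, equivalently $\mathrm{diag}(\M{A}\V{e}) = \M{\Phi}$; and (iii) that, granted (ii), Chung's formulas $(\ref{eq:dirCombLap})$ and $(\ref{eq:dirNormLap})$ collapse verbatim onto the combinatorial and normalized graph Laplacians of $\M{A}$.

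For (i): symmetry and nonnegativity of $\M{A} = \tfrac12(\M{\Phi}\M{P} + \M{P}^\Tra\M{\Phi})$ are immediate, since $\M{\Phi} = \mathrm{diag}(\V{\pi})$ has strictly positive diagonal (the hypergraph is connected, so the random walk is ergodic and $\V{\pi}$ is all-positive) and $\M{P}$ is nonnegative. Writing $\M{A}_{ij} = \tfrac12(\pi_i\M{P}_{ij} + \pi_j\M{P}_{ji})$ and using $\pi_i > 0$ together with the earlier observation that $\M{P}_{ij} > 0$ iff $\M{P}_{ji} > 0$, one gets $\M{A}_{ij} > 0 \iff \M{P}_{ij} > 0$. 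Then factoring $\M{P} = \M{D}_V^{-1}\M{W}\M{D}_E^{-1}\M{R}$ shows $\M{P}_{ij} > 0$ precisely when some hyperedge $e$ satisfies $\M{W}_{ie} > 0$ and $\M{R}_{ej} > 0$, i.e.\ when $i$ and $j$ belong to a common hyperedge; this is exactly the clique-expansion adjacency relation (the nonzero pattern of $\M{X}^\Tra\M{X}$). The diagonal of $\M{A}$ is generally nonzero, consistent with the singleton ``edges'' $\{v\}$ that the paper's clique-expansion definition allows, and in any event self-loops do not contribute to a graph Laplacian.

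The crux is (ii), which is a short computation: $\sum_j \M{A}_{ij} = \tfrac12\big(\pi_i\sum_j\M{P}_{ij} + \sum_j\pi_j\M{P}_{ji}\big)$, where $\sum_j\M{P}_{ij} = 1$ since $\M{P}$ is row-stochastic and $\sum_j\pi_j\M{P}_{ji} = \pi_i$ since $\V{\pi}^\Tra\M{P} = \V{\pi}^\Tra$. Hence $\sum_j\M{A}_{ij} = \tfrac12(\pi_i + \pi_i) = \pi_i$, so the diagonal degree matrix of the edge-weighted graph $\M{A}$ is $\M{\Phi}$ itself.

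Step (iii) then follows by definition: the combinatorial graph Laplacian of $\M{A}$ is $\mathrm{diag}(\M{A}\V{e}) - \M{A} = \M{\Phi} - \tfrac12(\M{\Phi}\M{P} + \M{P}^\Tra\M{\Phi})$, which is $\M{L}$ of $(\ref{eq:dirCombLap})$; and the normalized graph Laplacian of $\M{A}$ is $\M{\Phi}^{-1/2}\big(\mathrm{diag}(\M{A}\V{e}) - \M{A}\big)\M{\Phi}^{-1/2} = \M{\Phi}^{-1/2}\M{L}\M{\Phi}^{-1/2}$, which is $\M{\lap}$ of $(\ref{eq:dirNormLap})$. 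I do not anticipate a real obstacle; the only points deserving care are making explicit that the clique-expansion claim concerns the edge set rather than the precise weights --- the entries of $\M{A}$ differ in general from those of $\M{X}^\Tra\M{X}$ --- and the bookkeeping around diagonal entries/self-loops when forming the graph Laplacians of $\M{A}$.
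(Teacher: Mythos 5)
Your proposal is correct and follows essentially the same route as the paper's own proof: identify the support of $\M{A}$ with the clique-expansion edge set via the nonzero pattern of $\M{P}$, compute $\M{A}\V{e}=\tfrac12(\M{\Phi}\M{P}\V{e}+\M{P}^\Tra\M{\Phi}\V{e})=\tfrac12(\V{\pi}+\M{P}^\Tra\V{\pi})=\V{\pi}$ so that the degree matrix is $\M{\Phi}$, and then substitute into the graph Laplacian formulas. Your added remarks on symmetry, the two-sided support equivalence via $\M{P}_{ij}>0\iff\M{P}_{ji}>0$, and the self-loop bookkeeping are slightly more explicit than the paper's but change nothing substantive.
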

\begin{proof}
To see $\M{A}$ is the adjcency matrix of the clique expansion graph of $H$, observe that, by definition of an EDVW random walk on a hypergraph, $\M{P}_{ij}$ (and hence $\M{A}_{ij}$) is nonzero if and only if vertices $i,j \in e$, for some edge $e$ in the hypergraph $H$. 
This is precisely the edge condition in the clique expansion definition.
The combinatorial and normalized Laplacians of $\M{A}$ are the same as $\M{L}$ and $\M{\lap}$, respectively, since
$\M{A}\V{e} =\frac{1}{2}(\M{\Phi}(\M{P}\V{e}) + \M{P}^\Tra(\M{\Phi}\V{e}))
=\frac{1}{2}(\M{\Phi} \V{e} + \M{P}^\Tra \V{\pi})
= \V{\pi}$
and hence the weighted graph described by $\M{A}$ has diagonal degree matrix $\M{D}=\M{\Phi}$. Substituting $\M{D}$ and $\M{A}$ into $\M{D}-\M{A}$ and $\M{I}-\M{D}^{-1/2}\M{A}\M{D}^{-1/2}$ yields the result. 
\end{proof}

In this sense, Agarwal's criticism in \cite{Agarwal:2006:HOL:1143844.1143847} also applies to hypergraph Laplacians derived from Chung's digraph Laplacians.
However, as we will highlight further in the next section, Chung's digraph Laplacian matrices still preserve key information about the random walk important for clustering and hence serve as effective representations for our approach. 

Lastly, we note Proposition \ref{prop:chungEq} also answers a question of Chitra \cite[Section 5.1]{DBLP:conf/icml/ChitraR19} on whether there exist edge weights on the clique expansion such that its combinatorial Laplacian is ``close" to the hypergraph Laplacian obtained from Chung's digraph combinatorial Laplacian. 
Indeed, the proposition shows there exists edge weights such that {\it equality} holds. 
Nonetheless, it remains to be seen whether a more explicit formula for these edge weights may be obtained solely in terms of the hypergraph's hyperedge and vertex weights, rather than invoking the stationary distribution, as in Eqn. (\ref{eq:chung_undir}). 

\section{Proposed Clustering Methods} \label{sec:clusAlg}

Given a hypergraph $H=(V,E)$ with EDVW, and a desired number of clusters $k$, our goal is to partition $V$ into disjoint subsets $S_1,\dots, S_k$, such that a cluster quality objective function is optimized. 
Recall that in {\it graph} clustering, one such well known function is the normalized cut (Ncut), which measures the weight between each cluster $S$ to its complement $S^c$ relative to the cluster ``volume". 
More precisely
\begin{equation}
    \label{Ncut_def}
	\text{Ncut}(S_1, \dots, S_k) = \frac{1}{2}\sum_{i=1}^k\frac{\vol(\partial S_i)}{\vol(S_i)},
\end{equation}
where $\vol(S_i)=\sum_{u \in S_i} (\M{A}\V{e})_u$, $\vol(\partial S_i)=\sum_{u \in S_i, v \in S_i^c}\M{A}_{uv}$ and $\M{A}$ is the adjacency matrix of the graph. 
However, in our case we are utilizing the representative digraph of the EDVW hypergraph random walk (and its associated Laplacians) as our basis for clustering. 
Thus, we require a notion of Ncut for directed graphs, which, in turn, requires directed notions of $\vol(S)$ and $\vol(\partial S)$. 
Chung proposed such digraph analogs of volume, which are based in the probability transition matrix  $\M{P}$ and stationary distribution $\V{\pi}$:
\begin{align}
\vol(S_i) &= \sum_{u \in S_i}\V{\pi}_u, \label{dvol} \\
\vol(\partial S_i) &= \sum_{u \in S_i, v \in S_i^c} \V{\pi}_u \M{P}_{uv}. \label{dboun}
\end{align}
As shown in \cite{gleichdcluster}, we note $\vol(\partial S)=\vol(\partial S^c)$ and that the function $F_{\V{\pi}}=\V{\pi}_u \M{P}_{uv}$ is an example of a {\it circulation} function, a general type of flow on the directed graph (see \cite{FanChungDLaplacian} for more). 

We also note these directed notions of volume yield an elegant and intuitive random walk interpretation of Ncut, which was observed in the graph case by Meila and Shi \cite{Meila01arandom}. 
In particular, if we let $\Pr(S^c | S)$ denote the probability of transitioning to a vertex in $S^c$ given the current state is a vertex in $S$, then it is straightforward to show  $\Pr(S^c | S)= \frac{\text{vol}(\partial S)}{\vol(S)}$. 
By definition of Ncut, we thus have:
\begin{align}
\text{Ncut}(S,S^c) = \Pr(S^c | S)+ \Pr(S | S^c).
\end{align}

Now, returning our attention to Chung's digraph Laplacian matrices, recall Chung's digraph Laplacians are equivalent to the graph Laplacians associated with a particular edge-weighted graph derived from the digraph -- that is, the graph with edge-weighted adjacency matrix defined in Eqn. (\ref{eq:chung_undir}). 
Further, Gleich showed $\vol(S)$ and $\vol(\partial S)$ of this graph are equal to their directed analogs (defined above) of the associated digraph \cite[p.~7]{gleichdcluster}. 

This means that any graph clustering algorithm which minimizes graph Ncut will, when applied to Chung's digraph Laplacians, minimize the directed analog of Ncut obtained by using the directed volume definitions in Eqns. $(\ref{dvol})-(\ref{dboun})$ in the Ncut definition in Eqn. $(\ref{Ncut_def})$. 
We test two algorithms to obtain clusterings on Chung's Laplacian. 
The spectral method is motivated by Zhou et al.'s \cite{zhou2005learning} algorithm which heuristically minimizes the Ncut on Chung's Laplacian and another method proposed by Ng, Jordan, and Weiss \cite{andrew_Ng_spec_clust} which also heuristically minimizes the Ncut.
Applying a similar algorithm to Chung's normalized digraph Laplacian yields our suggested hypergraph spectral clustering method, Algorithm \ref{RDC_spec}.
Additionally, we propose using a Symmetric Non-negative Matrix factorization based algorithm for graph clustering based on the framework from Kuang et al. \cite{Kuang_symmetricnonnegative,SymNMF_Jogo}. 

\begin{algorithm} [tb]
\SetAlgoLined
\KwIn{A connected hypergraph $H=(V,E)$ with hyperedge weights and edge-dependent vertex weights, and desired number of clusters $k\geq 2$.}
 \nl Construct $\M{P}$ as in Eqn. (\ref{chitra_P}). \\
 \nl Construct $\M{\Phi} = \text{diag}(\V{\pi})$ such that $\V{\pi}^\Tra\M{P} = \V{\pi}$ and $\|\V{\pi}\|_1 = 1, \V{\pi} > 0$ \\
 \nl Construct $\M{T}$ as in Eqn. (\ref{eq:laplacian_core}). \\
 \nl Compute the $k$ eigenvectors paired with the $k$ largest eigenvalues of $\M{T}$ and collect them into the columns of the matrix $\M{U} \in \mathbb{R}^{|V| \times k}$.\\
 \nl Normalize the rows of $\M{U}$ with respect to the $L_2$ norm.\\
 \nl Cluster the rows of the matrix $\M{U}$ using $k$-means.\\
 \KwOut{$k$ vertex clusters}
 \caption{Representative Digraph Clustering-Spec (RDC-Spec)} 
 \label{RDC_spec}
\end{algorithm}

\begin{algorithm}[tb]
\SetAlgoLined
\KwIn{A connected hypergraph $H=(V,E)$ with hyperedge weights and edge-dependent vertex weights, and desired number of clusters $k\geq 2$.}
 \nl Construct $\M{P}$ as in Eqn. (\ref{chitra_P}). \\
 \nl Construct $\M{\Phi} = \text{diag}(\V{\pi})$ such that $\V{\pi}^\Tra\M{P} = \V{\pi}$ and $\|\V{\pi}\|_1 = 1, \V{\pi} > 0$ \\ \nl Construct $\M{T}$ as in Eqn. (\ref{eq:laplacian_core}). \\
 \nl Compute a rank $k$ Symmetric NMF of $\M{T}$ and collect the factors into the columns of the matrix $\M{U} \in \mathbb{R}_{\ge 0}^{|V| \times k}$.\\
 \nl Assign each row of $\M{U}$ to the column index of its max element. Use these as cluster assignments.\\
 \KwOut{$k$ vertex clusters}
 \caption{Representative Digraph Clustering-SymNMF (RDC-Sym)} 
 \label{RDC_sym} 
\end{algorithm}

When the hypergraph is disconnected, Algorithms \ref{RDC_spec} and \ref{RDC_sym} may be applied per connected component. 
We also note this algorithm utilizes the matrix $\M{T}$,
\begin{equation}
    \label{eq:laplacian_core}
    \M{T} = \M{I} - \M{\lap} = \frac{\M{\Phi}^{\frac{1}{2}}\M{P}\M{\Phi}^{-\frac{1}{2}} + \M{\Phi}^{-\frac{1}{2}}\M{P}^\Tra\M{\Phi}^{\frac{1}{2}}}{2}
\end{equation}
which is slightly different from Chung's digraph Laplacian, which is $\M{\lap}=\M{I}-\M{T}$. 
This modification is made to ensure the input is non-negative. 
This assumption is not necessary for spectral clustering, but necessary for non-negative matrix factorizations. 

Our framework can be applied to multi modal data analysis frameworks that seek to utilize hypergraph information.
For example Joint-NMF (JNMF), as proposed by Du et al. \cite{Du_JNMF}, is able to utilize multiple sources of information to perform clustering.
This is further discussed in Section \ref{sec:USP}.

Symmetric Non-negative Matrix Factorization (SymNMF) solves
\begin{equation}
    \label{SymNMF_obj}
    \min_{\M{F} \ge 0}\| \M{S} - \M{F}\M{F}^\Tra \|_F^2
\end{equation}
where $\M{S}$ is a symmetric, non-negative matrix and $\M{F} \in \mathbb{R}_{\ge 0}^{|V| \times r}$, $r$ is some positive integer which is usually set to the number of clusters when clustering.
It has been shown that SymNMF can achieve state-of-art results on various graph clustering tasks such as image segmentation \cite{SymNMF_Jogo,Kuang_symmetricnonnegative}.
Additionally, Kuang et al. \cite{SymNMF_Jogo} show SymNMF and Spectral clustering minimize the same objective function but with different constraints.
While SymNMF aims to solve Eqn. (\ref{SymNMF_obj}), spectral clustering aims to solve the same objective but imposes that $\M{F}^\Tra\M{F} = \M{I}$ instead of $\M{F}\ge0$.
While it depends on the algorithms utilized, SymNMF and Spectral Clustering have comparable complexities of (very) roughly $O(|V|^2k)$ per iteration.
SymNMF is dominated by the formation of the normal equations which requires the matrix products $\M{F}^T\M{F}$ and $\M{F}^T\M{S}$, whereas
Spectral Clustering is dominated by the cost of computing a truncated spectral decomposition of $\M{S}$.
We adapt SymNMF clustering to our framework in Algorithm \ref{RDC_sym}.

Lastly, we conclude this section by acknowledging other approaches that may be taken to cluster hypergraphs via the representative digraph of an EDVW hypergraph random walk. 
In particular, the aforementioned work by Cucuringu \cite{Cucuringu2019} takes a Stochastic Block Model (SBM) approach towards clustering digraphs. 
The SBMs are probabilistic models that generate random networks with planted communities; for more on SBMs and clustering, see \cite{Lei2015,abbe2017community}. 
Cucuringu shows that, under mild assumptions on the parameters of the Directed Stochastic Block Model, the number of vertices misclassified by their algorithm is well-bounded, with high probability. 
This is shown by applying particular tools from random matrix theory, which critically rely upon both the Hermitian and skew-symmetry properties of the matrix. 
The complex-values in Cucuringu's input representation serve the purpose of allowing digraph edge-directionality to be encoded in a matrix with these properties. 
For details interested readers may refer to \cite{Cucuringu2019}.
Although not explored further in this work, we note their algorithm may be applied to any edge-weighted digraph, and thus could be applied to the representative digraph of an EDVW hypergraph. 

\section{Existing Hypergraph Clustering Methods} \label{sec:others}

We've outlined an EDVW random-walk based framework for clustering hypergraphs that offers flexibility both in the choice of representation, as well as clustering method. 
Now, we will survey other clustering approaches that utilize different hypergraph representations or clustering methods than what we have proposed. 
Then, in Section \ref{sec:exp}, we will compare our clustering framework against these methods on text-document and other datasets. 
Before describing the details, it is helpful to take a broader viewpoint of hypergraph clustering approaches and briefly discuss how EDVW-based methods fit within this literature. 

Much of the recent work on hypergraph clustering is fundamentally centered around the question of how hyperedges can be cut or alternatively how a vertex contributes to a hyperedge.
In the context of the graph expansion-approaches, in which the aforementioned clique expansion is studied in place of the hypergraph, this question is answered by how the edges in the expansion graph are assigned weights.
These weights are usually uniform with respect to a single hyperedge.
Consequently, each vertex within a hyperedge is treated equally; for example, Zhou's Laplacian and hypergraph clustering formulation \cite{NIPS2006_3128} is one such example.

Instead of clustering based on how hyperedges are cut, an alternative approach is to consider how network motifs (certain small subgraphs, such as a 3-clique) are cut. 
This approach was suggested by Benson \cite{DBLP:journals/corr/BensonGL16a} and further explored in \cite{li2017inhomogeneous}.
In the motif-based clustering algorithms proposed in \cite{DBLP:journals/corr/BensonGL16a} no matter how a motif is cut it incurs a constant penalty. 
Later, the authors of \cite{li2017inhomogeneous} discussed the idea of inhomogeneous hypergraphs which can be thought of as motif clustering where different cuts in a motif incur different penalties.
The cost of some of the different cuts are assumed to be given a priori: for example, if a hyperedge contains vertices $a,b,c,d$, then a weight for the cost of separating $a,b$ and $c,d$ is given.
This information is then used to form each hyperedge into a clique that preserves cut constraints, yielding a clique expansion style method.

EDVW hypergraphs seek to address the same problem but from a different perspective.
Instead of having a-priori penalties for edges or relationships between vertices within a hyperedge, for each hyperedge, EDVW give us data-driven values for how much each vertex individually contributes to that hyperedge. 
This information is then used to describe a random walk, which serves as the basis for deriving representations utilized by our proposed clustering algorithms. Below, we survey other hypergraph clustering methods. 

\begin{itemize}
\item Clique-expanded Hypergraph Clustering (CHC): Proposed by Zhou in  \cite{NIPS2006_3128}, this algorithm expands each hyperedge in a given hypergraph into a clique and assigns a uniform weight value to each edge formed. Spectral clustering is then run on the resulting weighted graph which has the Laplacian 
\begin{align}
    \label{Zhou_lap}
    \M{\Delta} = \M{D}_V^{-1/2}\M{X}^\Tra\M{Z}\M{D}_E^{-1}\M{X} \M{D}_V^{-1/2}
\end{align}
where $\M{X} \in \{0,1\}^{|E|\times |V|}$ is the incidence matrix, $\M{Z} \in \mathbb{R}_+^{|E|\times |E|}$ is the diagonal matrix containing the weights of hyperedges, $\M{D}_E = \text{diag}(\M{X}\V{e})$ and $\M{D}_V$ is a diagonal matrix where the $(i,i)$th entry is $\V{e}^\Tra \M{Z}\V{x}_i$ and $\V{x}_i$ is the $i$th row of $\M{X}$.
\item NMF for Text-Clustering (NMF) \cite{nfm_txt_clst} solves the problem  $$\min_{(\M{U},\M{M}) \ge 0} \| \M{R}^\Tra - \M{U}\M{M}^\Tra \|_F^2,$$ $\M{U} \in \mathbb{R}_{\ge0}^{|V| \times k}$ and  $\M{M} \in \mathbb{R}_{\ge0}^{|E| \times k}$, then column normalizes the document factor matrix to unit 2-norm and uses its max row indices to assign documents to clusters.
\item K-Means (KM) runs the K-Means clustering algorithm to obtain a document clustering on the tf-idf matrix.
\item Clique Random Walk Clustering (CRWC): In \cite{NIPS2006_3128} Zhou proposed the uniform random walk discussed in Section \ref{Sec_RW}. The probability transition matrix for this random walk is $\M{P} = \M{D}^{-1}_V \M{X}^\Tra \M{Z} \M{D}^{-1}_E \M{X}$. This matrix is fed into Algorithm \ref{RDC_spec} in place of the EDVW stochastic matrix on line 1. The matrices in this equation are the same as for CHC above. This method is included to assess the value of using EDVW vs EIVW.
\item Spectral Bi-Clustering (SBC) \cite{specBiClust} uses a weighted incidence matrix to obtain both edge and vertex clusterings. Following \cite{specBiClust}, we use the tf-idf matrix to cluster documents. An SVD is applied to a normalized tf-idf matrix $\M{D}_1^{-1/2} \M{R} \M{D}_2^{-1/2}$ where $\M{D}_1 = \text{diag}(\M{R}\V{e})$ and $\M{D}_2 = \text{diag}(\M{R}^\Tra\V{e})$. Then k-means is run on a set of truncated-normalized singular vectors.
\end{itemize}

Table \ref{tab:alg_table} summarises various attributes in hypergraph clustering algorithms and indicates if each algorithm utilizes them.
The attributes are 1) Random Walk, if an algorithm is based on a random walk formulation, 2) Spectral, if an algorithms uses the spectrum of a matrix to cluster, 3) EDVW, if an algorithm uses information from edge-dependant vertex weights, and 4) EIVW if an algorithm uses edge-independent vertex weights.
A $\checkmark$ indicates a ``has" or yes while a $\times$ indicates a ``has not" or no.

\begin{table}[tb]
\begin{center}
 \begin{tabular}{||c c c c c||} 
 \hline
 Alg & Random Walk & Spectral & EDVW & EIVW \\ [0.5ex] 
 \hline\hline
 RDC-Spec & $\checkmark$ & $\checkmark$ & $\checkmark$ & $\times$ \\
 \hline
 RDC-Sym & $\checkmark$ & $\times$ & $\checkmark$ & $\times$ \\
 \hline
 CHC & $\checkmark$ & $\checkmark$ & $\times$ & $\checkmark$ \\
 \hline
 NMF & $\times$ & $\times$ & $\checkmark$ & $\times$ \\
 \hline
 KM & $\times$ & $\checkmark$ & $\checkmark$ & $\times$ \\
 \hline
 CRWC & $\checkmark$ & $\checkmark$ & $\times$ & $\checkmark$ \\
 \hline
 SBC & $\times$ & $\checkmark$ & $\checkmark$ & $\times$ \\
 \hline
 \hline
\end{tabular}
\end{center}
\caption{Algorithm characteristics on whether random walks are used (Random Walk), eigenvalues are used to cluster (Spectral), and whether edge-dependent or edge-independent vertex weights are considered (EDVW, EIVW).}
\label{tab:alg_table}
\end{table}

\section{Experiments} \label{sec:exp}
We test our proposed methods on a number of datasets. 
Most of these data sets come with ground truth allowing various metrics to be used to assess the output quality of an algorithm.

\subsection{Data Sets and Preprocessing}
\label{DSandProc}
We experiment on the following four data sets:
\begin{itemize}
    \item 20-Newsgroups
    \item United States Patents Data
    \item Reuter's Corpus Volume 1
    \item Gene-Disease Data
\end{itemize}
For each text data set the documents are taken as vertices and the words are taken as hyperedges.
The corresponding EDVWs are the tf-idf values.
That is for the matrix $\M{R}$, as in Eqn. (\ref{chitra_P}), rows correspond to words and columns corresponding to documents.
The entry in the $w$th row and $d$th column of $\M{R}$ is the tf-idf value between the $w$th word and the $d$th document.
Each hyperedge weight is computed as the standard deviation of a word (row of $\M{R}$) in the tf-idf matrix and encoded in the matrix $\M{W}$ as in Eqn. (\ref{chitra_P})\cite{DBLP:conf/icml/ChitraR19}. 
A few preprocessing steps are applied to each dataset.
Following \cite{specBiClust}, words that appear in over a certain fraction of the data-sets are removed, this fraction is determined experimentally and the best one is chosen for each algorithm.
We refer to this parameter as the sparsity parameter.
Similarly, words that only appear in a single document or that appear in no documents are removed.
Every data set is checked to make sure it consists of a single connected component.
Various information about each data set is listed in Table \ref{text_data_stats_table}.

\begin{table}
\begin{center} 
 \begin{tabular}{||c c c c c||} 
 \hline
 Data & \#-Vertices & \#-Hyperedges & NNZ (Ind. Mtrx) & \#-Clstrs \\ [0.5ex] 
 \hline\hline
 G1 & 1498 & 22755 & .0049  & 4 \\
 \hline
 G2 & 1545 & 19081 & .0048 & 4 \\
 \hline
 G3 & 1430 & 19412 & .0048 & 4 \\
 \hline
 G4 & 1945 & 20260 & .0039 & 5 \\
 \hline \hline
 A22 & 835 & 4496 & .0160 & 15 \\
 \hline
 A42 & 965 & 4692 & .0140 & 17 \\
 \hline
 D02 & 744 & 4499 & .0151 & 12 \\
 \hline
 B06 & 688 & 4106 & .0164 & 12 \\
 \hline \hline
 RCV1 & 9625 & 29969 & .0023 & 4 \\
 \hline \hline
 GD & 2261 & 12368 & .0041 & na \\
 \hline
 \hline
\end{tabular}
\end{center}
\caption{
Hypergraph data statistics. For text data, statistics are reported after pruning with a sparsity parameter of 0.2. The labels G1-G4, A22-B06, RCV1, and GD refer to 20Newsgroups, US Patents, Reuters Corpus Volume 1, and gene-disease datasets, respectively; see Section \ref{subsec:results} for more. 
}
\label{text_data_stats_table}
\end{table}

\subsection{Clustering Metrics}
In order to assess clustering quality on the 3 text data sets we compute the Normalized Mutual Information (NMI), average F1 score, and Jaccard index.
These metrics are based on comparing two clustering results represented by $X$ and $Y$, which may be treated as two vectors of integer labels where $X_i = X_j$ only if the $i$th and $j$th ($i \ne j$) vertices are assigned to the same cluster.
For our purposes one of $X$ and $Y$ is the ground truth cluster labels and the other is the predicted cluster labels from some clustering algorithm.
Normalized mutual information (NMI) is given by $\frac{2\cdot I(X,Y)}{H(X) + H(Y)}$, where $I(X,Y)$ is the mutual information and $H(X)$ is the entropy.
The average F1 score is a generalization of the F1 score for multi-clusterings. $|X| \cup |Y|$ is the number of points the two assignments classify the same and $|X| \cap |Y|$ is $|X| + |Y| - |X \cup Y|$. 
For two clusters, the F1 score (see \cite{F1_Leskovec}) is $F_1(X,Y) = 2 \frac{|X \cap Y|}{|X| + |Y|}$, and the Jaccard index is $J(X,Y) = \frac{|X \cap Y|}{|X \cup Y|}$.
In order to handle multi-clustering, we use the Khun-Munkres algorithm to compute a matching between clusters to maximize the overall score for both the Jaccard and average F1 scores.
This approach is similar to that used by Kuang et al. \cite{Kuang_symmetricnonnegative}.

\subsection{Results} \label{subsec:results}

\subsubsection{20-Newsgroups}
\label{sec:20news}
\begin{figure}[tb] 
\centering
\includegraphics[width=0.5\textwidth]{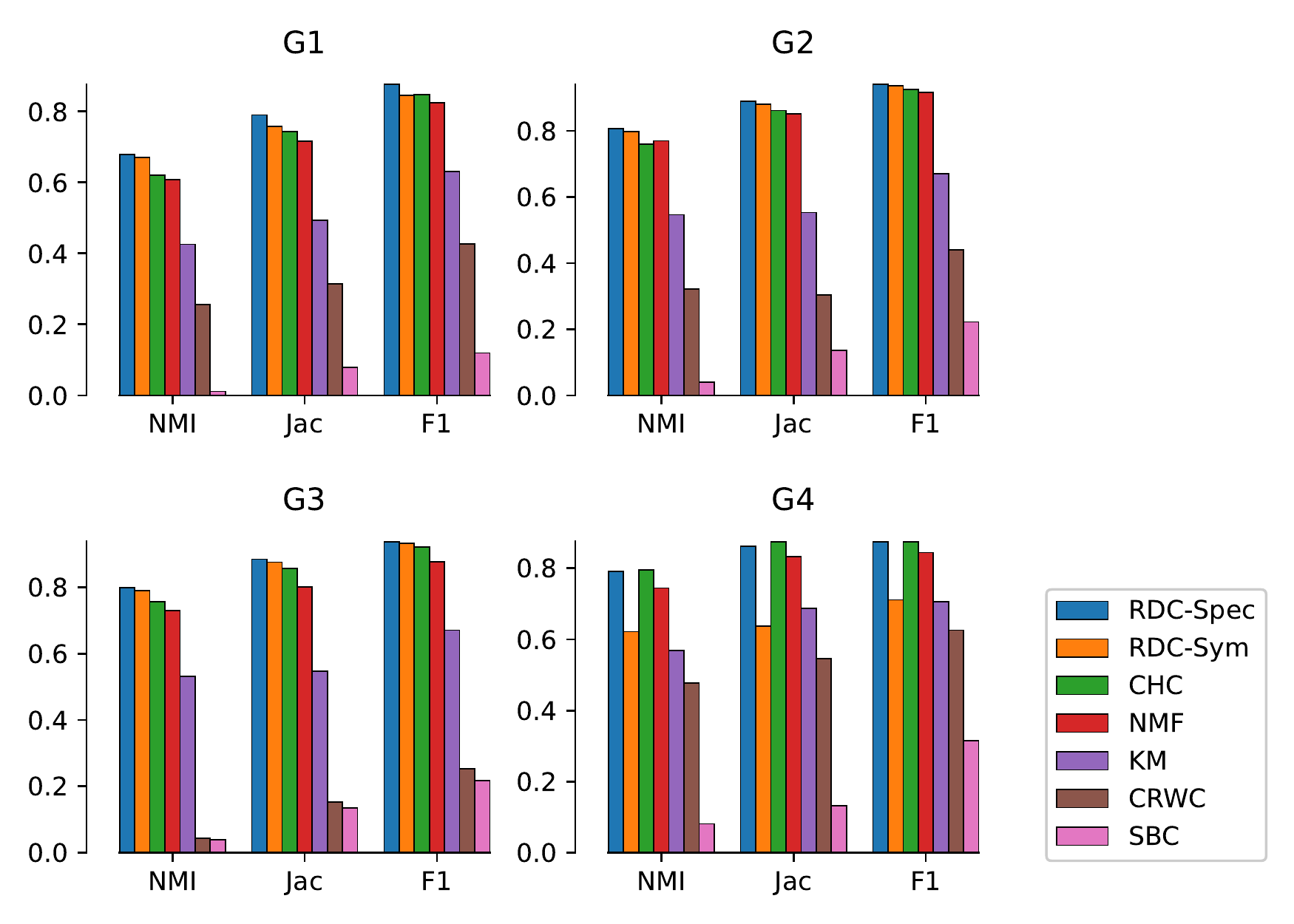}
\caption{Clustering scores based on NMI, Jaccard index, and F1 score, for 4 subsets of the 20News data set.}
\label{20news_plts}
\end{figure}

The first set of experiments was done on the 20-News\footnote{http://qwone.com/~jason/20Newsgroups/} data set.
This data set consists of 4 major categories each with a varying number of subcategories.
We select 4 subsets of the subcategories to cluster on.
The subcategories for each experiment are G1) OS Microsoft Windows, automobiles, cryptography, and politics-guns, G2) atheism, computer graphics, medicine, and Christianity  G3) Windows X, motorcycles, space, religion-miscellaneous and G4)
computer graphics, OS Microsoft Windows, IBM PC hardware, MAC hardware, and Windows X.
The first three groups are expected to be well separated while the last is expected to present a more challenging clustering problem.
All clusters have between 318 and 398 documents.
Some basic statistics from the 20-News data set subsets we use are given in Table \ref{text_data_stats_table} for G1-G4.
These hypergraph statistics are all from using a pruning parameter of 0.2 which gives representative results for all algorithms.
From Figure \ref{20news_plts} we observe that our framework gives competitive results for all subsets of the 20-News data.
\subsubsection{US-Patents}
\label{sec:USP}
\begin{figure}[tb]
\centering
\includegraphics[width=0.52\textwidth]{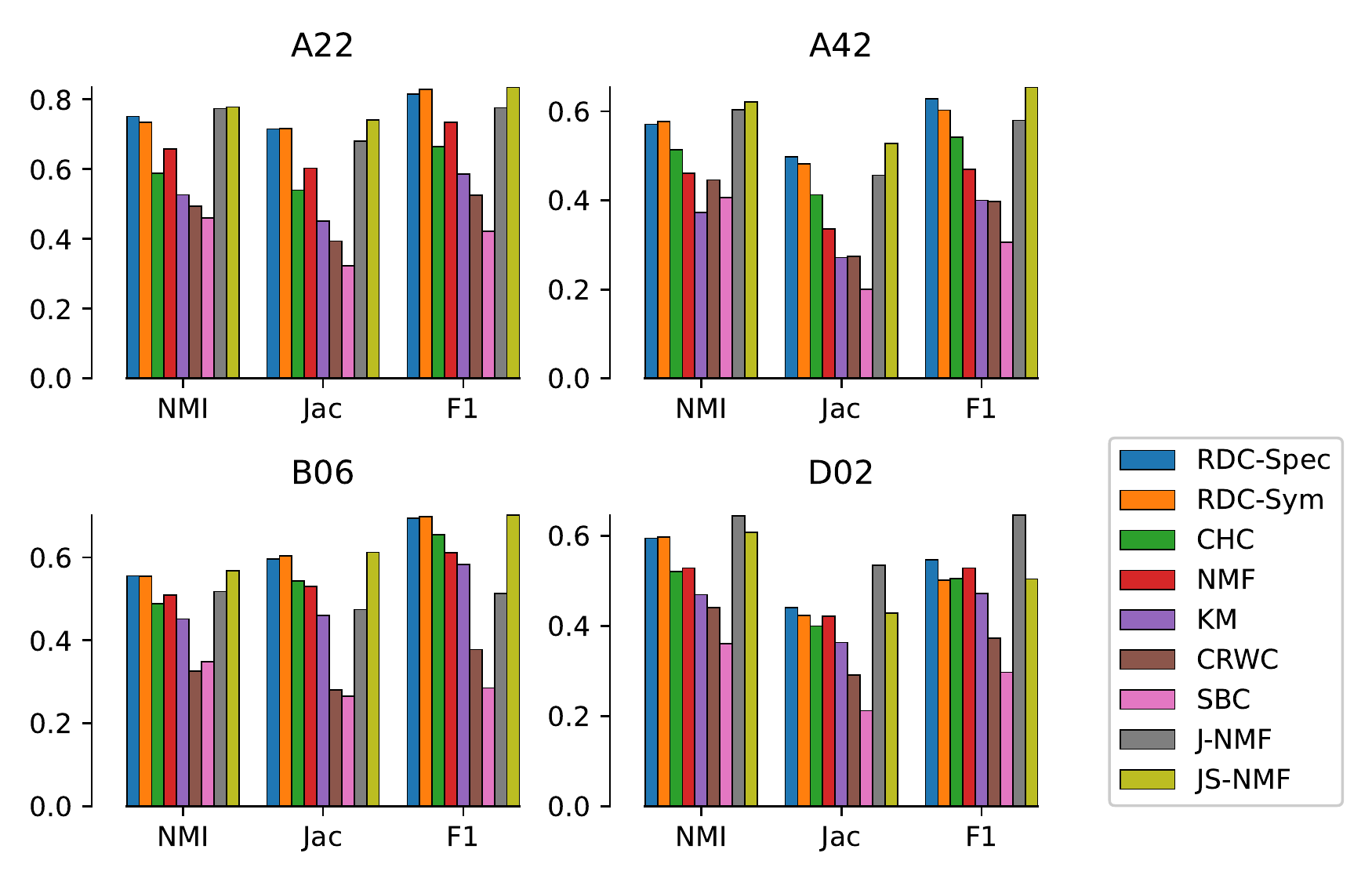}
\caption{Clustering scores based on NMI, Jaccard index, and F1 score, for four
US Patents data sets.}
\label{USP_plts}
\end{figure}

This data was originally processed by Du et al. \cite{Du_JNMF} from the Patents View\footnote{www.patentsview.org} website and contains word count and citation information for a number of patents claims for 13 different categories (denoted A22, B06, etc.) each with multiple sub-classes.
These sub-classes are used as ground truth and only patents belonging to a single sub-class are used.
Finally, only sub-classes with 40 or more patents are kept.
For our experiments we selected the categories A22, A42, D02, and B06 as these each have a sufficient number of patents that belong to only a single sub-class.

As this data also contains information on which patents reference each other, this additional citation information may be utilized via Joint-NMF (J-NMF) style algorithms, which solve
\begin{equation}
    \label{eq:JNMF}
   \min_{ \{\M{M},\M{Z},\M{\Tilde{M}} \} \ge 0} \| \M{X} - \M{Z}\M{M}^\Tra \|_F^2 + \gamma\| \M{S} - \M{M} \M{\Tilde{M}}^\Tra \\|_F^2 + \beta \|\M{M} - \M{\Tilde{M}}\|_F^2
\end{equation}
where $\M{X} \in \mathbb{R}_{\ge 0}^{|E| \times |V|}$ (here, $\M{X}$ does not necessarily refer to a 0,1 incidence matrix) and $\M{S} \in \mathbb{R}_{\ge 0}^{|V| \times |V|}$ and $\{\beta,\gamma\} \ge 0$ are some weighting parameters. 
The matrix $\M{M}$ is then used to obtain clusters similar to the standard NMF clustering procedure. 
Note that if $ \beta = \gamma = 0$ then a standard NMF objective is recovered.
For our experiments we set $\gamma$ and $\beta$ as recommended by Du et al.
For this data set $\M{X}$ is set to  the EDVW incidence matrix $\M{R}$ and $\M{S}$ is a symmetric, $0,1$ matrix indicating if two patents cite each other.
Additionally, we adapt J-NMF to utilize Chung's Laplacian via the matrix $\M{T}$, eq. (\ref{eq:laplacian_core}), and decompose two symmetric matrices.
We refer to this as Joint-Symmetric NMF (JS-NMF), the new objective is given in Eqn. (\ref{eq:JSNMF}).
\begin{equation}
    \label{eq:JSNMF}
    \min_{ \{\M{M},\M{\hat{M}},\M{\Tilde{M}} \} \ge 0} \| \M{C} - \M{M}\M{\hat{M}}^\Tra \|_F^2 +\alpha \|\M{M} - \M{\hat{M}}\|_F^2 + \gamma\| \M{S} - \M{M} \M{\Tilde{M}}^\Tra \\|_F^2 + \beta \|\M{M} - \M{\Tilde{M}}\|_F^2
\end{equation}
where $\M{C} \in \mathbb{R}_{\ge 0}^{|V| \times |V|}$ is $\M{T}$ and $\M{S}$ is the patent citation data as in Eqn. (\ref{eq:JNMF}).
We note that JS-NMF performs the best out of all algorithms for 3 of the 4 US-Patent groups, as can be seen in Figure \ref{USP_plts}.
Due to the enforcement of symmetry for both norms in Eqn. (\ref{eq:JSNMF}), there is an additional parameter $\alpha$, similar to $\gamma$ and $\beta$, that must be chosen.
For fairness we attempt to generalize Du et al.'s recommendation for parameter setting but believe performance could further be improved if more effort were put into selecting these parameters.
Results are visualized in Figure \ref{USP_plts}.

\subsubsection{RCV1}
\label{sec:RCV1}

\begin{figure}[tb]
\centering
\includegraphics[width=0.48\textwidth]{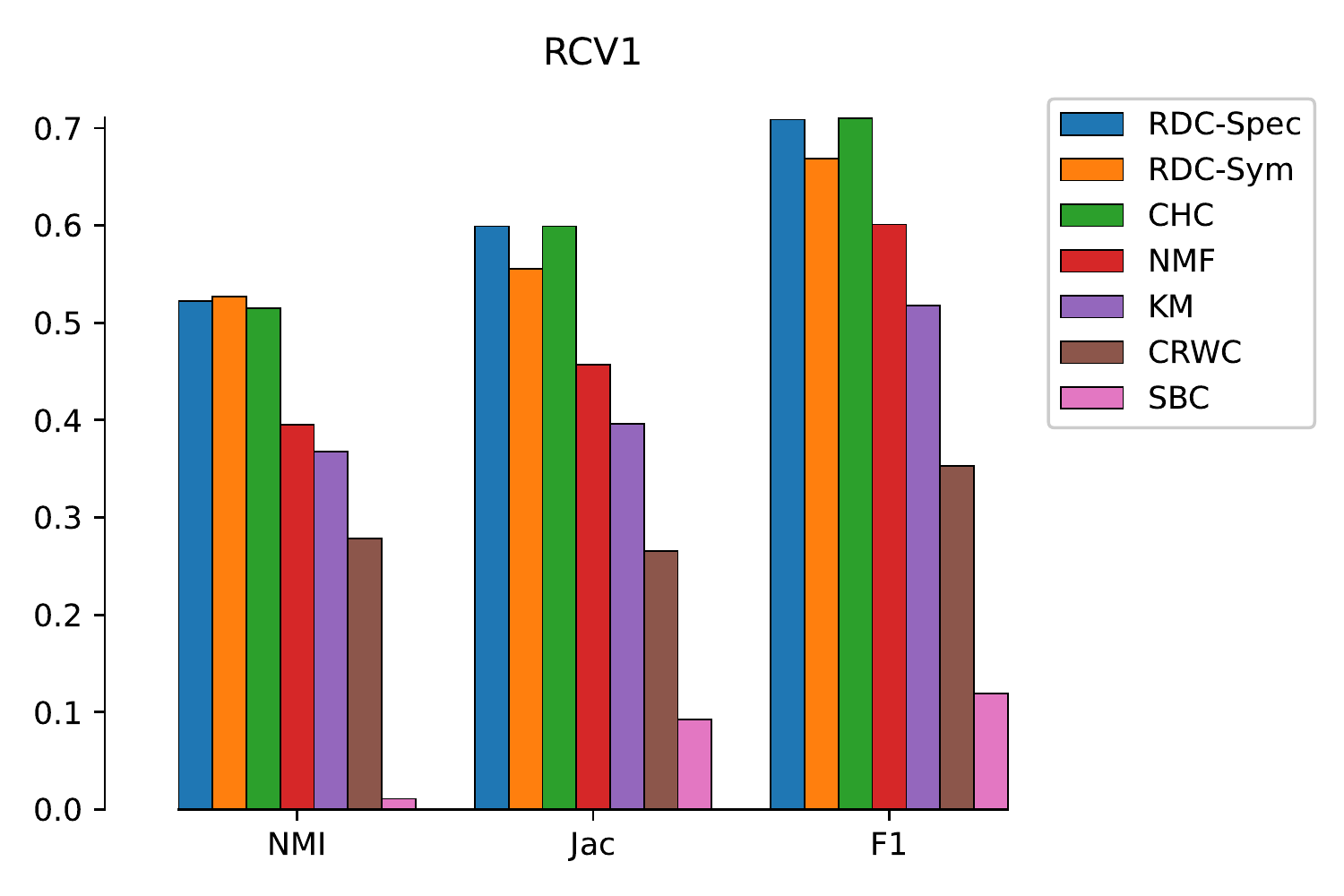}
\caption{Clustering scores based on the measures of NMI, Jaccard index, and F1 score, for 
RCV1 Data.}
\label{rcv1_plts}
\end{figure}

The Reuters Corpus Volume 1 (RCV1)\footnote{http://www.daviddlewis.com/resources/testcollections/rcv1/} data set is a collection of newswire stories.
Table 2 lists some basic statistics for this data. 
This is the largest data set we run on and we observe a similar trend to the two previous experiments.
RDC-Spec and RDC-Sym perform competitively with CHC, with scores separated by very slim margins. 
The scores for each clustering quality metric are visualized in Figure \ref{rcv1_plts}.


\subsubsection{Gene-Disease Data}
\label{sec:Gene}
This dataset consists of collections of genes associated with human diseases, taken from DisGeNET\footnote{https://www.disgenet.org/}. 
This may be modeled as a hypergraph in which vertices are diseases, and genes are hyperedges. Table \ref{text_data_stats_table} presents basic statistics for this data. 
For each disease-gene pair, DisGeNET computes a ``Gene Disease Association" (GDA) score between 0 and 1, which is based on the number of and types of sources supporting that disease-gene association. 
Higher values indicate stronger associations; see DisGeNET's documentation\footnote{https://www.disgenet.org/dbinfo} for full details. 
For our purposes, GDA scores serve naturally as EDVW for the disease-gene hypergraph.

Since this dataset lacks ground truth clusters, we turn to other clustering quality metrics.
Du et al. in \cite{HSymNMF2} report the average-Ncut value obtained by various clustering algorithms on a number of data sets.
This technique is difficult to apply directly to our work where different algorithms are based on different representations of a hypergraph: different representations assign different edge weights which affect the Ncut values.
To this end we run RDC-Spec, RDC-Sym, and CHC on the Gene-Disease hypergraph and compute the average-Ncut and average-conductance, Eqns. (\ref{av_normalized_cut}) and (\ref{av_conductance}), of each clustering over 10 different sparsifications of the hypergraph.
These 3 algorithms are chosen as they tend to achieve consistently high scores on the previous data sets.
Each sparsified input data set is clustered 10 times and the average of the average-normalized cut and average-conductance is computed.
We report the best scores for each algorithm.
\begin{align}
    \text{Av-Ncut}(S_1, \cdots , S_k) &= \frac{1}{2k} \sum_{i=1}^k \frac{\text{vol}{(\partial S_i)}}{\text{vol}(S_i)} \label{av_normalized_cut} \\
    \text{Av-Cond}(S_1, \cdots , S_k) &= \frac{1}{2k} \sum_{i=1}^k \frac{\text{vol}{(\partial S_i)}}{\min\{\text{vol}(S_i),\text{vol}(S_i^c)\}} \label{av_conductance}
\end{align}

Additionally, for every pair of algorithms that are based on two different hypergraph representations, each algorithm is run on the other algorithm's hypergraph representation, producing two different scores.
For example, the clustering yielded by RDC-Spec operating on $\M{T}$, Eqn. (\ref{eq:laplacian_core}), is taken and its average-Ncut and average-conductance values are computed on the matrix $\M{\Delta}$, Eqn. (\ref{Zhou_lap}).
The same is done for RDC-Sym and the reverse is done for CHC.
Note these matrices have the same non-zero pattern.
Therefore differences in cuts values and clusterings are due to  edge-weightings.
Tables \ref{ncut_table15} and \ref{ncut_table25} present the results for 15 and 25 clusters respectively.
These cluster sizes were chosen arbitrarily.
RDC-Spec and RDC-Sym achieve lower cut and conductance scores than CHC on both graphs.

\begin{table}
\begin{tabular}{c|c|c|c}%
\bfseries Graph & \bfseries RDC-Spec & \bfseries RDC-Sym & \bfseries CHC
\csvreader[head to column names]{CSV_Data/GD_cuts_k15.csv}{}
{\\\hline\csvcoli&\csvcolii&\csvcoliii&\csvcoliv}
\end{tabular}
\caption{Average-Normalized cut (ANC) and Average-Conductance (ACo) values for RDC-Spec, RDC-Sym, and CHC on matrices $\M{T}$ and $\M{\Delta}$ with 15 clusters}
\label{ncut_table15}
\end{table}

\begin{table}
\begin{tabular}{c|c|c|c}%
\bfseries Graph & \bfseries RDC-Spec & \bfseries RDC-Sym & \bfseries CHC
\csvreader[head to column names]{CSV_Data/GD_cuts_k25.csv}{}
{\\\hline\csvcoli&\csvcolii&\csvcoliii&\csvcoliv}
\end{tabular}
\caption{Average-Normalized cut (ANC) and Average-Conductance (ACo) values for RDC-Spec, RDC-Sym, and CHC on matrices $\M{T}$ and $\M{\Delta}$ with 25 clusters.}
\label{ncut_table25}
\end{table}

\section{Conclusion}
We presented a flexible framework for clustering a hypergraph and showed that edge-dependent vertex weights represents the information in a hypergraph well in the context of clustering. 
As recently proposed 
\cite {DBLP:conf/icml/ChitraR19}, these weights may be utilized to define hypergraph random walks, which naturally yield a number of different hypergraph Laplacians via the representative digraph of the random walk. 
Focusing on Chung's normalized digraph Laplacian, we explained its effectiveness as a representation of a hypergraph for clustering based on its relationship to a normalized cut criterion, and proposed a suite of clustering algorithms that utilize this input. 
We demonstrated the viability of our frameworks in comparison to other methods through experiments on 3 text datasets with ground truth, and on a gene-disease relation data set via well-known partition quality metrics. 
We found algorithms utilizing the proposed hypergraph Laplacian performed consistently well and frequently better than other methods. 

Many directions remain for future work. 
First, among the hypergraph Laplacians considered in Section \ref{subsec:laps}, we only utilized Chung's normalized digraph Laplacian in our experiments, although other Laplacians can be derived from the representative digraph. 
It would be interesting to explore whether these Laplacians are effective representations for hypergraph clustering -- particularly Li and Zhang's asymmetric digraph Laplacian \cite{li2010random}, which we studied in Proposition \ref{prop:liZhang}, as well as complex-valued digraph matrices utilized in recent clustering work \cite{Cucuringu2019}. 
Additionally, we observed better performance from joint methods that utilize multiple representations and many other combinations here can be explored \cite{Du_JNMF}.

Second, rather than utilize EDVW random walks to form Laplacians for clustering, an alternative approach is to use measures associated with the random walk itself, such as hitting and commute times. 
Such parameters serve as relational measures between pairs of vertices; for instance, hitting times measure the expected number of steps until one vertex is reached from another, and have been utilized in digraph clustering schemes \cite{chen2008clustering}. 
In the context of EDVW hypergraph random walks, such metrics may be of similar use for hypergraph clustering.

\begin{acks}
This work was partially funded under the High Performance Data Analytics (HPDA) program
at the Department of Energy's Pacific Northwest National Laboratory. PNNL Information Release:
PNNL-SA-153213. Pacific Northwest National Laboratory is operated by Battelle Memorial Institute
under Contract DE-ACO6-76RL01830.
Koby Hayashi acknowledges support from the United States Department of Energy through the Computational Sciences Graduate Fellowship (DOE CSGF) under grant number: DE-SC0020347.
\end{acks}

\bibliographystyle{ACM-Reference-Format}
\bibliography{sample-base}

\appendix




\end{document}